\newcommand{\xvec}{{\bf{x}}}
\newcommand{\zvec}{{\bf{z}}}
\newcommand{\mline}[2][c]{\begin{tabular}[#1]{@{}c@{}}#2\end{tabular}}
\title{Adversarial Partial Multi-Label Learning} 
\author{%
	Yan Yan$^{1,2}$ \quad \quad Yuhong Guo$^1$\\
	$^1$School of Computer Science, Carleton University, Canada\\
	$^2$School of Computer Science, Northweastern Polytechnical University, China
}
\begin{document}

\maketitle

\begin{abstract}
Partial multi-label learning (PML),
which tackles the problem of learning multi-label prediction models
from instances with overcomplete noisy annotations,
has recently started gaining attention from the research community.
In this paper, we propose a novel adversarial learning 
model, PML-GAN, under a generalized encoder-decoder framework
for partial multi-label learning. 
The PML-GAN model uses a disambiguation network to 
identify irrelevant labels and uses a multi-label prediction network
to map the training instances to their disambiguated label vectors,
while deploying a generative adversarial network 
as an inverse mapping from label vectors to data samples in the input feature space.
The learning of the overall model corresponds to a minimax adversarial game, which
enhances the correspondence of input features with the output labels in a bi-directional mapping.
Extensive experiments 
are conducted on both synthetic and real-world partial multi-label datasets,
while
the proposed model demonstrates the state-of-the-art performance
for partial multi-label learning.
\end{abstract}

\section{Introduction}
\label{introduction}
In partial multi-label learning (PML), 
each training instance is assigned multiple candidate labels which are only partially relevant;
that is, some irrelevant noise labels are assigned together with the ground-truth labels.
As it is typically difficult and costly to precisely annotate instances for 
multi-label data \cite{xie2018partial},
the task of PML naturally arises in many real-world scenarios with crowdsource annotations. 
In such a scenario, in order to collect the complete set of positive labels for each data instance, 
one can gather all labels provided by multiple annotators to
form the candidate label set, 
which is usually overcomplete and contains additional noisy labels beyond all the true labels,
leading to the PML problem. 
Figure \ref{crowd} presents such an example of overcompletely 
annotated training image for object recognition, 
where the 
candidate labels provided by crowdsource annotators 
cover all the ground truth labels (in black color)
and some irrelevant noise labels (in red color). 
PML is much more challenging than standard multi-label learning as the true labels are hidden among
irrelevant labels and the number of true labels is unknown.
The goal of PML is to learn a good multi-label prediction model from such a partial label training set,
and hence reduce the annotation cost.

An intuitive strategy of PML 
is to treat all candidate labels as relevant ground truth, 
thus any off-the-shelf multi-label classification method
can be adapted to induce an expected multi-label predictor
\cite{zhang2014review}. 
This strategy, though simple, cannot work well since taking the noise labels as part of the true labels 
will mislead the multi-label training and induce inferior prediction models.
The PML work in \cite{xie2018partial} 
assumes that each candidate label has a confidence score of being a true label, 
and learns the confidence scores and the classifier in an alternative manner
by minimizing a confidence weighted ranking loss. 
Although this work yields some reasonable results, 
the estimation of label confidence scores is error-prone, especially when noise labels dominate, 
which can seriously impair the classifier's performance. 
The recent work in \cite{xiepartialAAAI20} proposes to perform ground-truth label recovery
and noise label identification simultaneously by exploring the label correlations 
and the relationships between the noise labels and feature representations.
Another recent work in \cite{Fang2018PartialML} 
presents a two-stage PML method.
It estimates the confidence values of the candidate labels using iterative label propagation 
and then chooses the highly confident candidate labels as credible labels 
to induce a multi-label prediction model. 
This work however suffers from the cumulative errors induced in propagation,
which can impact the label confidence estimation and consequently impair the prediction.

\begin{figure*}[t]
\centering
\begin{minipage}[t]{0.32\textwidth}
\includegraphics[width=5.1cm,height=3.35cm]{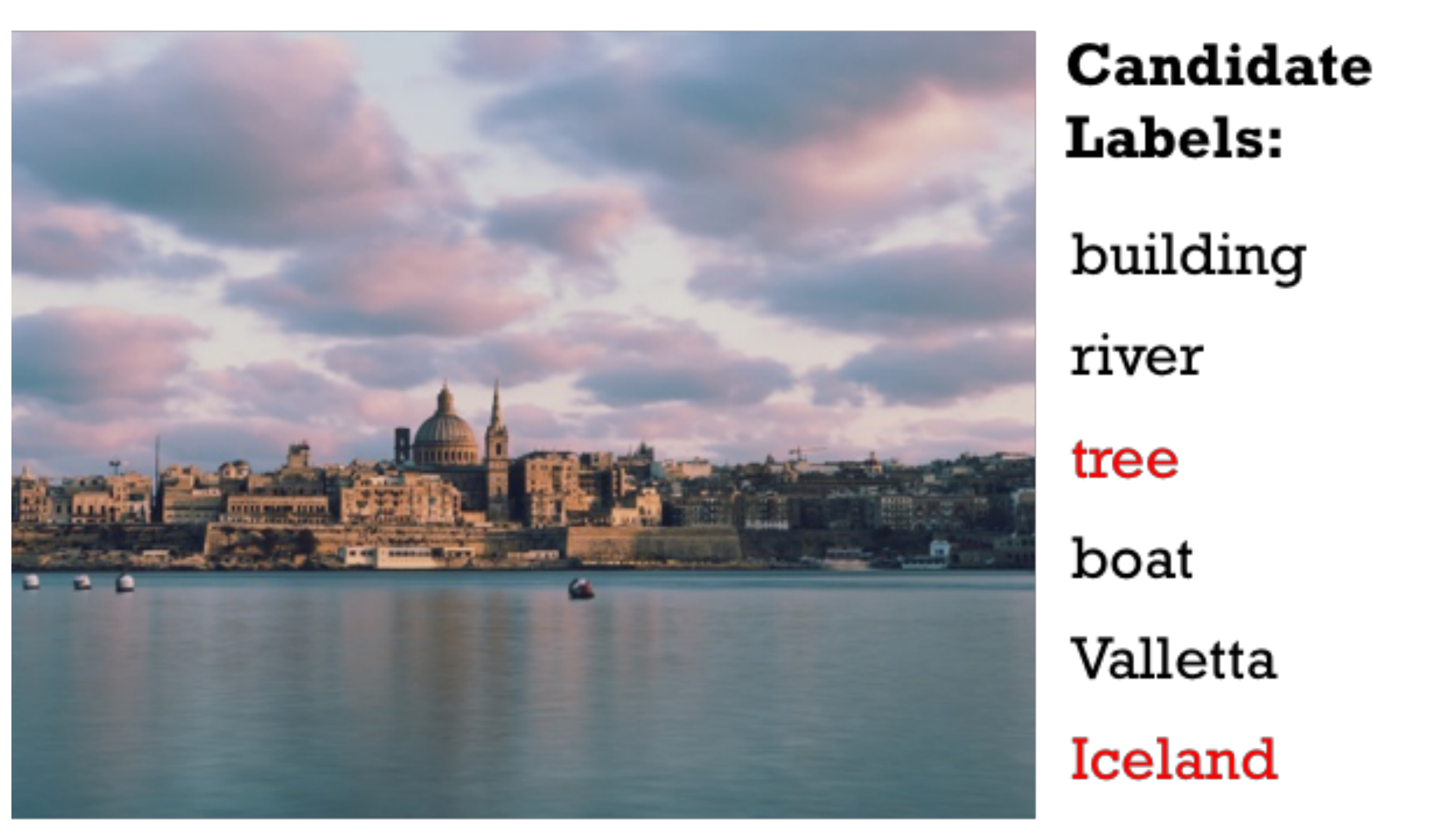}
\vskip -.01in
\caption{An annotated image under the partial multi-label learning (PML) setting.}
\label{crowd}
\end{minipage}
\qquad\;\;
\begin{minipage}[t]{0.60\textwidth}
\centering
\includegraphics[width=7.8cm,height=3.2cm]{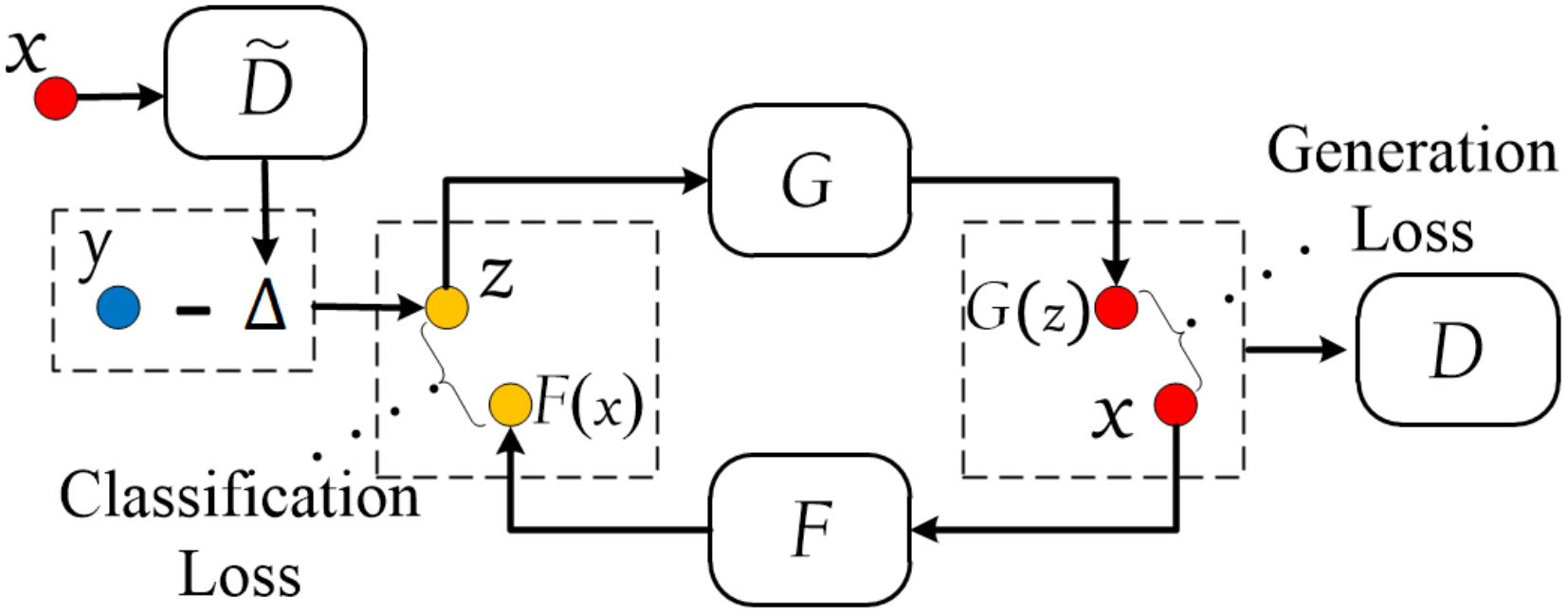}
\caption{The proposed PML-GAN model. 
	It has four component networks: 
generator $G$, disambiguator $\widetilde{D}$, 
predictor $F$, and discriminator $D$.
}
\label{PML-GAN}
\end{minipage}
\vskip -.1in
\end{figure*}

In this paper, we propose a novel adversarial learning model, PML-GAN,
under a generalized encoder-decoder framework
to tackle the partial multi-label learning problem. 
The PML-GAN model comprises four component networks: 
a disambiguation network that predicts the probability of each candidate label being an additive noise for a training instance; 
a prediction network that predicts the disambiguated {\em true} labels of each instance from its input features; 
a generation network that generates samples in the feature space given latent vectors in the label space; 
and a discrimination network that separates the generated samples from the real data. 
The prediction network and disambiguation network together form an encoder that
maps data samples in the input feature space to the disambiguated label vectors,
while the generation network and discrimination network form 
a generative adversarial network (GAN) 
as an inverse decoding mapping 
from vectors in the multi-label space to samples in the input feature space. 
The learning of the overall model corresponds to a minimax adversarial game, 
which enhances the correspondence of input features with the output labels through 
the bi-directional encoder-decoder mapping mechanism,
and consequently boosts multi-label prediction performance.
To the best of our knowledge, this is the first work that exploits a
generative adversarial model based bi-directional mapping mechanism  
for PML.
We conduct extensive experiments on multiple multi-label datasets under partial multi-label learning setting.
The empirical results show the proposed PML-GAN yields the state-of-the-art PML performance.

\section{Related Work}

Multi-label learning is a prevalent classification problem in many real world domains,
where each instance can be assigned into multiple classes simultaneously.
Many multi-label learning methods developed in the literature 
exploit label correlations at different degrees to produce multi-label classifiers \cite{zhang2014review},
including the first order methods \cite{zhang2018binary}, 
second order methods \cite{li2014multi}, and high-order methods 
\cite{burkhardt2018online}.
Nevertheless, standard multi-label learning methods all assume each training instance
is annotated with a complete set of ground truth labels,
which can be impractical in many domains, where the annotations are obtained through crowdsourcing.
With the union of annotations produced by multiple noisy labelers under the crowdsourcing setting,
the partial multi-label learning (PML) problem arises naturally in real world scenarios,
where the set of labels assigned to each training instance 
not only contain the ground truth labels, but also some additional irrelevant labels.

PML is more challenging than standard multi-label learning.  
The previous PML work in \cite{xie2018partial} 
proposes two methods, PML-FP and PML-LC, to estimate the label confidence values and 
optimize the relevance ordering of labels 
by exploring the structural information in both feature and label spaces.
However, due to the inherent property of alternative optimization, 
in these methods, the estimation error of labeling confidence values 
can negatively impact the coupled multi-label predictor. 
The work in \cite{Sun2019PartialML} denoises the observed label matrix based on low-rank and sparse matrix decomposition.
The recent work in \cite{xiepartialAAAI20} proposes to learn the multi-label classifier and noisy label identifier
by exploiting the label correlations as well as exploring the feature-induced noise model 
with the observed noise-corrupted label matrix.
The work in 
\cite{xupartialAAAI20} attempts to recover the label distributions by exploiting the
topological information from the feature space and label correlations from the label space,
and then induces a predictive model by fitting the recovered label distributions.
Another work in \cite{chen2020multi} proposes 
to tackle multi-view PML problem
using graph-based disambiguation.
In another recent work \cite{Fang2018PartialML}, the authors propose to address PML problem using a two-stage strategy.
It first estimates the label confidence value of each candidate label with iterative label propagation,
and then performs multi-label learning over selected credible labels based on the confidence values 
by using pairwise label ranking
(PARTICLE-VLS) or maximum a posteriori reasoning (PARTICLE-MAP).
The work in \cite{wang2019discriminative} also presents a two-stage PML method 
that estimates the label confidence matrix in the first stage.
However, in these two-stage methods, 
the confidence label estimation errors 
can consequently degrade the multi-label learning performance without correction
interaction, especially when there are many noise labels.

Studies on weak learning, partial label learning, and noisy label learning 
have some connections with PML, but address different problems.
Weak label learning tackles the problem of multi-label learning with incomplete labels
\cite{sun2010multi,wei2018learning},
where some ground truth labels are missed out from the annotations.
Partial label learning (PLL) 
tackles multi-class classification under the setting where for each training instance
there is one ground-truth label among the given candidate label set
\cite{cour2011learning,liu2012conditional,zhang2015solving,yu2016maximum,chen2018learning}.
PLL methods cannot be directly applied on the more challenging PML problems, 
as under PML one has unknown numbers of ground truth labels among the candidate label set for each training instance. 
Noisy label learning (NLL) tackles multi-class classification problems 
where some ground-truth labels are {\em replaced} by noise labels
\cite{TheNIPS18,lee2018robust,zhang2018generalized,han2018co,kaneko2019label,lee2019robust}.
The off-the-shelf NLL methods cannot be directly applied on the more challenging PML problems
due to the difference of problem settings.

Generative adversarial networks (GANs) \cite{goodfellow2014generative},
which perform minimax adversarial training over a generation network 
and a discrimination network, 
are one of the most popular generative models since its introduction.
During the past years, 
a vast range of GAN-based adversarial 
learning methods have been developed to address different tasks, including
semi-supervised learning \cite{kumar2017semi,lecouat2018semi}, 
unsupervised learning \cite{jakab2018unsupervised}, 
and learning with noisy labels~\cite{TheNIPS18}.
The proposed work in this paper however is the first one that exploits 
generative adversarial models
for partial multi-label learning.

\section{Proposed Approach}

In this section, we present the proposed adversarial partial multi-label learning model, PML-GAN,
under the following setting.
Assume we have a training set $S  = (X,Y) = \{({\bf x}_i,{\bf y}_i)\}\operatorname{}\limits_{i=1}^N$, 
where ${\bf x}_i\in \mathbb{R}^d $ denotes the input feature vector for the $i$-th instance,
and ${\bf y}_i \in \{0,1\}^L$ is the corresponding annotated label indicator vector.
There are multiple 1 values in each  ${\bf y}_i$,  which indicate either the ground truth labels 
or the additional mis-annotated noise labels.
We aim to learn a good multi-label prediction model 
from this partially labeled training set.

The proposed PML-GAN model is illustrated in Figure~\ref{PML-GAN}, which 
comprises four component networks:  
disambiguation network $\widetilde{D}$, prediction network $F$, 
generation network $G$ and discrimination network $D$.
The four components coordinate with and enhance each other 
under an encoder-decoder learning framework,
which forms inverse mappings
between the instance vectors in the input feature space 
and the continuous label vectors in the output class label space. 
Below we present these model components, the learning objective and the training algorithm in details.

\subsection{Prediction with Disambiguated Labels}

Comparing to standard multi-label learning,
the main difficulty of PML is that the annotated labels $\{{\bf y}_i\}$ in the training data
contain additive noise labels. 
The main challenge lies in identifying the ground truth labels ${\bf z}^*_i$
from each annotated candidate label vectors ${\bf y}_i$;
that is dropping the additional 1s from each candidate label vector ${\bf y}_i$.
We propose to tackle this challenge by using a disambiguation network 
$\widetilde{D}: \Omega_{\bf x}\rightarrow\Omega_\Delta$ 
($\Omega_{\cdot}$ denotes the corresponding domain space),
which predicts the irrelevant labels for a given instance. 
Hence the true label indicator vector ${\bf z}^*_i$ can be recovered as 
${\bf z}^*_i=\mbox{ReLU}({\bf y}_i -\Delta_i)$ in the ideal case,
where $\Delta_i\geq 0$ denotes the output of the disambiguation network $\widetilde{D}({\bf x}_i)$,
and ReLU$(\cdot)=\max(\cdot, 0)$ denotes the commonly used rectified linear unit activation function.
ReLU is used here to ensure the disambiguation effort
is only counted on the candidate labels. 
Then we can learn a prediction network 
$F: \Omega_{\bf x}\rightarrow \Omega_{\bf z}$,
i.e., a multi-label classifier,
to predict the disambiguated ground truth labels for each instance. 

Although the label indicator vectors in the training data are provided as discrete values, 
it is difficult for either the disambiguation network or the prediction network
to directly produce discrete output values.
Instead, by using a sigmoid activation function 
on the last layer of each network,
$\widetilde{D}({\bf x})$ and $F({\bf x})$ can predict the probability
of each class label being the additive irrelevant label and the ground truth label respectively.
With the disambiguation network and prediction network,
we can perform partial multi-label learning by minimizing the 
{\em classification loss} on training data $S$:
\begin{align}
\label{predloss}
\min_{F,\widetilde{D}}&\quad \mathcal L_{c}(X,Y;F,\widetilde{D}) 
	=\operatorname*{\sum}\limits_{({\bf x}_i,{\bf y}_i) \sim S} \ell_c (F({\bf x}_i),{\bf z}_i)
\\
\mbox{s.t.} &\quad
{\bf z}_i=\mbox{ReLU}({\bf y}_i -\Delta_i),
	\; \Delta_i =\widetilde{D}({\bf x}_i),\; \forall ({\bf x}_i,{\bf y}_i) \sim S 
\nonumber	
\end{align}
where ${\bf z}_i$ denotes the disambiguated label confidence vector with continuous values in $[0,1]$,
which can be viewed as a relaxation of a true label indicator vector, 
while $\ell_c(\cdot,\cdot)$ 
denotes the cross-entropy loss between the predicted probability of each label 
and its confidence of being a ground-truth label.
We expect that
the disambiguation network and the prediction network
can coordinate with each other to mutually minimize
this disambiguated classification loss.

\subsection{Inverse Mapping with GANs}

The prediction network can be viewed as an encoder that
maps data samples in the input feature space to the disambiguated label vectors.
To enhance the label disambiguation and hence improve multi-label classification,
we propose to conduct an inverse decoding mapping 
from label vectors ${\bf \hat z}\in [0,1]^L$ in the continuous label vector space 
to samples in the input feature space. 
In particular, we propose to deploy a generative adversarial network (GAN) model
to transform continuous label vectors in the label space 
into samples in the input feature space. 
The GAN model comprises a generation network $G$ and a discrimination network $D$.
Given a label vector $\hat {\bf z}$ sampled from a prior 
distribution $P(\hat {\bf z})$, 
which can be viewed as a low-dimensional representation vector,
one can generate a sample $\hat {\bf x}$ using the generation network, 
${\bf x}=G(\hat {\bf z})$.
A two-class discriminator $D$ is used to discriminate the generated samples from the real samples in $S$.
The training of the GAN model is a minimax optimization problem over an {\em adversarial loss} function:
\begin{align}
    \label{adversarial_loss}
	\min_G\max_D\, \mathcal L_{adv}(G, D, S)=\mathbb{E}_{{\bf x}_i \sim S} [\mathrm{log} D({\bf x}_i)] +
	\mathbb{E}_{\hat {\bf z} \sim P(\hat {\bf z})} [\mathrm{log} (1\!-\!D(G(\hat {\bf z})))],
\end{align}
where the discriminator $D$ tries to maximally distinguish the generated samples $G(\hat {\bf z})$ from the real data samples in $S$,
and the generator $G$ tries to generate samples that are similar to the real data as much as possible such
that the discriminator cannot tell the difference.

In theory, the samples generated by the adversarially trained generator $G$ 
can have an identical distribution with the real data $S$~\cite{goodfellow2014generative}.
To further ensure the generator $G$ can provide an inverse mapping function 
from low-dimensional vectors in the label space to samples in the feature space,
we further propose to decode the disambiguated training label vectors into the training samples $S$ with $G$ 
by deploying a {\em generation loss}:  
\begin{align}
    \label{generation_loss}
	\mathcal L_{g}(G, S) &= \operatorname*{\sum}\limits_{({\bf x}_i, {\bf y}_i) \sim S} \ell_g (G({\bf z}_i),{\bf x}_i),\quad 
	\mbox{with}\;\; {\bf z}_i=\mbox{ReLU}({\bf y}_i -\widetilde{D}({\bf x}_i)),
\end{align}
where $\ell_g(\cdot,\cdot)$ 
measures the generation loss on each training instance, 
which can be a least squares function. 
This generation loss can 
enhance the label disambiguation 
and improve multi-label learning.

\begin{algorithm}[t!]
\caption{Minibatch stochastic gradient descent training of PML-GAN}
\label{alg_1}
\begin{algorithmic}[0] 
\STATE \textbf{Input}: training set $S$;\; 
	trade-off parameter $\beta$;\; 
	$k$-- \# of update steps for the discriminator.\\[.5ex]
\STATE \textbf{for} number of training iterations \textbf{do} \\
   \STATE \hspace{0.3cm} Sample a minibatch of \emph{m} samples 
	$\{({\bf x}_i,{\bf y}_i)\}$ from training set $S$. \\[.2ex]
	\STATE \hspace{0.3cm} Sample \emph{n} label vectors 
	$\{\hat {\bf z}_i\}_{i=1}^n$ from a prior $P(\hat {\bf z})$. \\[.2ex]
   \STATE \hspace{0.3cm} Update the network parameters of $G, \widetilde{D}, F$ by
   descending with their stochastic gradients:
{\small  
\begin{align*}   
\quad\nabla_{\Theta_{G,\widetilde{D},F}}\! 
	\left\{\!\!
	\begin{array}{l}	
	\frac{1}{n}\sum_{i=1}^n \beta [\mathrm{log}(1-D(G(\hat {\bf z}_i)))] \, + \\[.2ex]
	\frac{1}{m}\sum_{i=1}^m\Big[\ell_c \big(F({\bf x}_i),\mbox{ReLU}({\bf y}_i \!-\!\widetilde{D}({\bf x}_i))\big)+
	\ell_g \big(G(\mbox{ReLU}({\bf y}_i -\widetilde{D}({\bf x}_i))),{\bf x}_i\big)\Big]\; 
	\end{array}
	\!\!\!\right\}
\end{align*}   
\vskip -.1in
}
   \STATE \hspace{0.3cm} \textbf{for} r=1:\emph{k} \textbf{do} \\
	\STATE \hspace{0.6cm} Sample \emph{n} label vectors 
	$\{\hat {\bf z}_i\}_{i=1}^n$ from a prior $P(\hat {\bf z})$. \\[.2ex]
        \STATE \hspace{0.6cm} Update the parameters of the discrimination network 
        \STATE \hspace{0.6cm} by ascending with its stochastic gradient:
{\small  
\begin{align*}   
\qquad\nabla_{\Theta_D} 
	\, \beta\,\Big[ \frac{1}{m}\sum_{i=1}^m \mathrm{log} D({\bf x}_i)
	+\frac{1}{n}\sum_{i=1}^n \mathrm{log} (1\!-\!D(G(\hat {\bf z})))\Big]
\end{align*}   
}
	\vspace{ -.1in}	
    \STATE \hspace{0.3cm} \textbf{end for}
\STATE \textbf{end for} \\
\end{algorithmic}
\end{algorithm}

\subsection{Learning with PML-GANs}

By integrating the classification loss in Eq.(\ref{predloss}), 
the adversarial loss in Eq.(\ref{adversarial_loss}), and the generation loss in Eq.(\ref{generation_loss}) together,
we obtain the following minimax optimization problem for the proposed PML-GAN model:
\begin{align}
	\min_{G,\widetilde{D},F}\!\max_D  &\;\; 
	\mathbb{E}_{({\bf x}_i,{\bf y}_i) \sim S} 
	\Big(\ell_c (F({\bf x}_i),{\bf z}_i) +\ell_g (G({\bf z}_i),{\bf x}_i)\Big) +
\nonumber\\	
	&
	\beta\,\Big(\! \mathbb{E}_{{\bf x}_i \sim S} [\mathrm{log} D({\bf x}_i)]
	+\mathbb{E}_{\hat {\bf z} \sim P(\hat {\bf z})} [\mathrm{log} (1\!-\!D(G(\hat {\bf z})))]\!\Big)
\label{eq:objective}	
\\
	\mbox{s.t.} \quad &
{\bf z}_i=\mbox{ReLU}({\bf y}_i -\widetilde{D}({\bf x}_i)),
	\;\; \forall ({\bf x}_i,{\bf y}_i) \sim S 
\nonumber
\end{align}
where $\beta$ is a trade-off hyperparameter that controls 
the relative importance of the adversarial loss;
the objective function can be denoted as $\mathcal{L}(G,\widetilde{D},F,D)$. 
The learning of the overall model corresponds to a minimax adversarial game, 
which enhances 
the bi-directional mapping between the feature and label vector spaces, 
and consequently boosts multi-label prediction performance.

We perform training using a minibatch based stochastic gradient descent algorithm. 
In each iteration of the training, the minimization over $G, \widetilde{D}, F$
and the maximization over $D$ are conducted alternatively.
The overall training algorithm is presented in Algorithm \ref{alg_1}.

\section{Theoretical Results}
\begin{wrapfigure}{r}{0.45\textwidth} 
\centering
\vskip -.15in
\includegraphics[width=1.50in,height=0.70in]{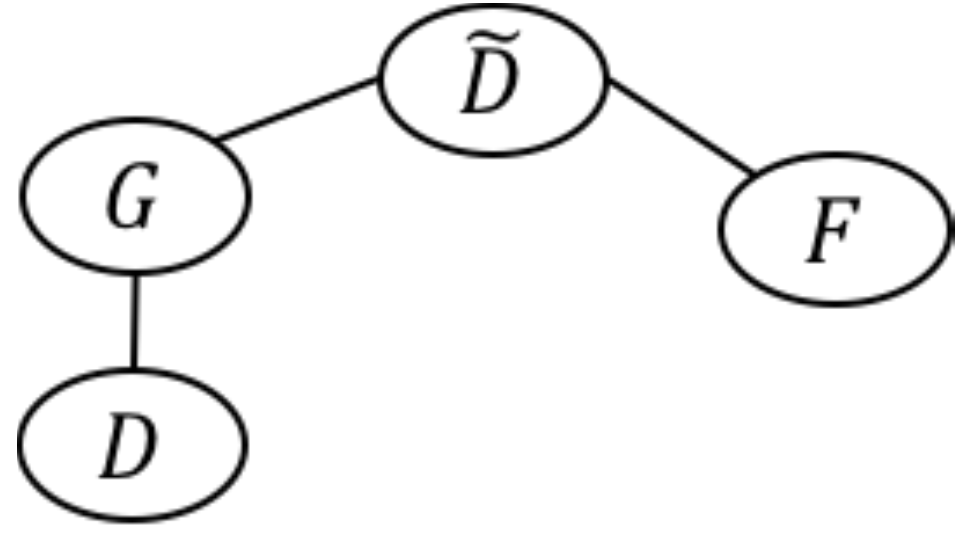}
\caption{Dependence graph of PML-GAN.}
\label{fig:DG}
\end{wrapfigure} 

In the proposed PML-GAN model, given the generator $G$, the discriminator $D$ is conditionally
independent from the predictor $F$ and the disambiguator $\widetilde{D}$. 
Between $G, F$ and $\widetilde{D}$, $G$ and 
$F$ are conditionally independent from each other given $\widetilde{D}$. 
Their independence relationship can be illustrated using the undirected dependence graph in Figure~\ref{fig:DG}.
Based on these conditional independence relationships, we have the following optimality results.

\newtheorem{prop}{Proposition}
\begin{prop}
\label{prop1}
For any $G$, $\widetilde{D}$, and $F$, the optimal discriminator $D$ is given by
\begin{align}
			D^*_{G, \widetilde{D}, F}({\bf x}) 
			=D^*_{G}({\bf x}) 
			= {p_S({\bf x})}/{\big(p_S({\bf x})+p_g({\bf x})\big)}
\end{align}
where $p_S(\cdot)$ and $p_g(\cdot)$ denote the probability distributions of real and generated data respectively. 
\end{prop}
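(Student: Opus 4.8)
The plan is to follow the classical GAN optimality argument of Goodfellow et al., adapted to the joint objective $\mathcal{L}(G,\widetilde{D},F,D)$ in Eq.~(\ref{eq:objective}), with the crucial preliminary observation that the discriminator enters only through the adversarial term. First I would fix $G$, $\widetilde{D}$, and $F$ and inspect which parts of $\mathcal{L}$ actually depend on $D$. The classification loss $\ell_c(F({\bf x}_i),{\bf z}_i)$ and the generation loss $\ell_g(G({\bf z}_i),{\bf x}_i)$ carry no dependence on $D$ whatsoever, so maximizing $\mathcal{L}$ over $D$ is equivalent to maximizing only the bracketed adversarial loss (the positive factor $\beta$ and all $D$-free terms being irrelevant to the $\arg\max$). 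This is precisely the content of the conditional independence statement preceding the proposition: since the generated distribution is determined by $G$ alone, the optimizing $D$ cannot depend on $\widetilde{D}$ or $F$, which justifies the identity $D^*_{G,\widetilde{D},F} = D^*_{G}$.

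Next I would rewrite the two adversarial expectations as integrals over the feature space. Writing $p_S$ for the density of the real samples and $p_g$ for the density induced on ${\bf x}=G(\hat{\bf z})$ by the prior $\hat{\bf z}\sim P(\hat{\bf z})$, a pushforward (change of variables) turns the objective into
\begin{align*}
V(D) = \int_{\Omega_{\bf x}} \big[\, p_S({\bf x})\log D({\bf x}) + p_g({\bf x})\log\big(1-D({\bf x})\big)\,\big]\, d{\bf x}.
\end{align*}
The crux is then a pointwise maximization: for each fixed ${\bf x}$ the integrand has the form $a\log t + b\log(1-t)$ with $a=p_S({\bf x})\ge 0$, $b=p_g({\bf x})\ge 0$, and $t=D({\bf x})\in[0,1]$. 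This scalar function is concave on $(0,1)$, and setting its derivative $a/t - b/(1-t)$ to zero yields the unique maximizer $t^{*}=a/(a+b)$, i.e. $D^*({\bf x})=p_S({\bf x})/\big(p_S({\bf x})+p_g({\bf x})\big)$. Because the integrand is maximized independently at every ${\bf x}$, this pointwise optimum is also the global optimum of $V(D)$ over the space of measurable functions $D:\Omega_{\bf x}\to[0,1]$.

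I do not anticipate a genuine technical obstacle here, since the calculation mirrors the original GAN proof. The one point deserving care is the justification of the reduction from the full objective to the adversarial term, together with the accompanying claim that $D^*$ is a function of $G$ only; I would make this explicit rather than treating it as obvious, as it is exactly what distinguishes this PML-GAN setting (with its extra encoder-side losses $\ell_c$ and $\ell_g$) from the vanilla GAN. A minor caveat I would also flag is the standard support assumption needed to make the pointwise argument rigorous: the maximizer is well-defined wherever $p_S({\bf x})+p_g({\bf x})>0$, and the value of $D^*$ off the combined support is immaterial to $V(D)$.
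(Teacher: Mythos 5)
Your proof is correct and takes essentially the same approach as the paper: the paper likewise first argues that the optimal $D$ depends only on $G$ (phrased as conditional independence of $D$ from $\{F,\widetilde{D}\}$, which is exactly your observation that $\ell_c$ and $\ell_g$ carry no dependence on $D$) and then invokes Proposition~1 of Goodfellow et al.\ for fixed $G$, whereas you simply carry out that cited pointwise maximization of $a\log t + b\log(1-t)$ explicitly. Your explicit justification of the reduction and the remark about $D^*$ being determined only where $p_S({\bf x})+p_g({\bf x})>0$ make the argument more self-contained than the paper's two-line citation, but they do not constitute a different method.
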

\begin{proof}
\vskip -.1in
Due to the conditional independence relationship between $D$ and $\{F, \widetilde{D}\}$, 
the optimal discriminator $D$ only depends on the generator $G$.
Given fixed $G$, the optimal discriminator can be derived in the same way as in the standard GAN 
\cite[Proposition 1]{goodfellow2014generative}.
\vskip -.2in
\end{proof}

\begin{prop}
Assume the model has sufficient capacity. 
Let $C(G,\widetilde{D},F)=\max_D \mathcal{L}(G,\widetilde{D},F,D)$. 
$H(\cdot)$ denotes an entropy function. 
Given fixed $\widetilde{D}$, the minimum of $C(G,\widetilde{D},F)$
is lower bounded by 
$ \mathbb{E}_{({\bf x}_i,{\bf y}_i) \sim S}\; H\big(\mbox{ReLU}({\bf y}_i -\widetilde{D}({\bf x}_i))\big) -\beta\,\log 4$, 
which can be achieved when $F({\bf x}_i)=\mbox{ReLU}({\bf y}_i -\widetilde{D}({\bf x}_i))$, 
$G(F({\bf x}_i))={\bf x}_i$, and $p_g = p_S$. 
\end{prop}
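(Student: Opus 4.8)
The plan is to first collapse the inner maximization over $D$, then minimize the remaining three loss terms separately by exploiting the conditional independence structure in Figure~\ref{fig:DG}. By Proposition~\ref{prop1}, for any fixed $G$ the optimal discriminator is $D^*_G({\bf x}) = p_S({\bf x})/(p_S({\bf x})+p_g({\bf x}))$, depending only on $G$. Substituting $D^*_G$ into the adversarial term of $\mathcal{L}$ and following the standard GAN derivation \cite[Theorem~1]{goodfellow2014generative}, I would rewrite that term as $\beta\big(2\,\mathrm{JSD}(p_S\|p_g) - \log 4\big)$, where $\mathrm{JSD}$ is the Jensen--Shannon divergence. Since $\mathrm{JSD}(p_S\|p_g)\geq 0$ with equality exactly when $p_g = p_S$, this contributes a lower bound of $-\beta\log 4$, attained precisely at $p_g = p_S$.

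Next, with $\widetilde{D}$ held fixed the constraint ${\bf z}_i = \mbox{ReLU}({\bf y}_i - \widetilde{D}({\bf x}_i))$ fixes each target ${\bf z}_i$, so the classification and generation losses decouple over $F$ and $G$. For the classification loss, $\ell_c$ is the binary cross-entropy between $F({\bf x}_i)$ and ${\bf z}_i$; a per-coordinate argument shows $\ell_c(F({\bf x}_i),{\bf z}_i)$ is minimized over $F({\bf x}_i)$ exactly at $F({\bf x}_i) = {\bf z}_i$, where it equals the entropy $H({\bf z}_i)=H(\mbox{ReLU}({\bf y}_i - \widetilde{D}({\bf x}_i)))$. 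For the generation loss, $\ell_g$ is least-squares, so $\ell_g(G({\bf z}_i),{\bf x}_i)\geq 0$ with equality exactly when $G({\bf z}_i) = {\bf x}_i$; combined with $F({\bf x}_i)={\bf z}_i$ this gives the stated condition $G(F({\bf x}_i))={\bf x}_i$. Because $G$ and $F$ are conditionally independent given $\widetilde{D}$, these two minimizations can be carried out jointly without interference.

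Summing the three per-term lower bounds and taking expectation over $S$ yields $C(G,\widetilde{D},F) \geq \mathbb{E}_{({\bf x}_i,{\bf y}_i)\sim S}\,H(\mbox{ReLU}({\bf y}_i - \widetilde{D}({\bf x}_i))) - \beta\log 4$, matching the claim. The main obstacle -- and the reason the statement is phrased as a lower bound rather than an equality -- is that $G$ enters \emph{both} the generation loss (demanding $G({\bf z}_i)={\bf x}_i$ on the disambiguated training inputs) and the adversarial loss (demanding $p_g=p_S$ when $G$ is driven by prior samples $\hat{\bf z}\sim P(\hat{\bf z})$). These requirements need not be simultaneously satisfiable by a single $G$; the sufficient-capacity assumption, together with compatibility between the prior $P(\hat{\bf z})$ and the empirical distribution of the disambiguated vectors ${\bf z}_i$, is what lets one $G$ meet both, so the bound is achieved. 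I would therefore close by verifying that under these assumptions the three optimality conditions $F({\bf x}_i)={\bf z}_i$, $G(F({\bf x}_i))={\bf x}_i$, and $p_g=p_S$ are mutually consistent.
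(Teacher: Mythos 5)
Your proposal follows essentially the same route as the paper's proof: collapse the inner maximization via Proposition~\ref{prop1}, decouple the minimizations over $F$ and $G$ using the conditional-independence structure given fixed $\widetilde{D}$, bound the cross-entropy term below by the entropy $H({\bf z}_i)$ (attained at $F({\bf x}_i)={\bf z}_i$), the adversarial term by $-\beta\log 4$ (your Jensen--Shannon form is identical to the paper's sum of KL divergences to $\tfrac{p_S+p_g}{2}$), and the least-squares generation term by $0$. The only difference is that you explicitly flag that the generation-loss optimum $G({\bf z}_i)={\bf x}_i$ and the adversarial optimum $p_g=p_S$ must be simultaneously achievable by a single $G$ --- a compatibility point the paper merely asserts --- so your treatment is, if anything, slightly more careful on the same argument.
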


\begin{proof}
This proposition suggests that $F$ and $G$ should be inverse mapping functions for each other in the ideal optimal case.
Based on the solution for optimal discriminator $D^*$ in Proposition 1, we have:
{\small
\begin{align*}
	&\;\mathcal L_{adv}(G, D^*, S)\\
	= &\;
\mathbb{E}_{{\bf x} \sim p_S} [\mathrm{log} D^*_G({\bf x})]
	+\mathbb{E}_{\hat {\bf z} \sim P(\hat {\bf z})} [\mathrm{log} (1\!-\!D^*_G(G(\hat {\bf z})))]
\\	
	= &\;
\mathbb{E}_{{\bf x} \sim p_S} [\mathrm{log} D^*_G({\bf x})]
	+\mathbb{E}_{{\bf x} \sim p_g} [\mathrm{log} (1-D^*_G({\bf x}))]
\\	
	= &\;
\mathbb{E}_{{\bf x} \sim p_S} \left[\mathrm{log} \frac{p_S(\xvec)}{p_S(\xvec)+p_g(\xvec)}\right]
+\mathbb{E}_{{\bf x} \sim p_g} \left[\mathrm{log}\frac{p_g(\xvec)}{p_S(\xvec)+p_g(\xvec)}\right]
\end{align*}
}
Hence,
{\small
\begin{align*}
	&\; C(G,\widetilde{D},F)
	=\max_D \mathcal{L}(G,\widetilde{D},F,D)\\ 
	&=\left\{
	\begin{array}{l}
\!\!\!	\mathbb{E}_{({\bf x}_i,{\bf y}_i) \sim S}\,
		\ell_c (F({\bf x}_i),\mbox{ReLU}({\bf y}_i -\widetilde{D}({\bf x}_i)))\,+ \\[.5ex]
\!\!\!	\mathbb{E}_{({\bf x}_i,{\bf y}_i) \sim S}\,
		\ell_g (G(\mbox{ReLU}({\bf y}_i -\widetilde{D}({\bf x}_i))),{\bf x}_i)\, +\\[.5ex]
		\!\!\!	\beta\,\Big( \mathbb{E}_{{\bf x} \sim p_S} [\mathrm{log} \frac{p_S(\xvec)}{p_S(\xvec)+p_g(\xvec)}]
		+\mathbb{E}_{{\bf x} \sim p_g} [\mathrm{log}\frac{p_g(\xvec)}{p_S(\xvec)+p_g(\xvec)}]\Big)
	\end{array}
\!\!	\right\} 
\end{align*}
}
Note given fixed $\widetilde{D}$, $F$ is conditionally independent from $G$ and $D$.
Hence the minimization of $C(G,\widetilde{D},F)$ over $F$ 
can be independently conducted from the minimization over $G$. 
Let $\zvec_i = \mbox{ReLU}({\bf y}_i -\widetilde{D}({\bf x}_i))$. 
With the cross-entropy loss function $\ell_c(\cdot,\cdot)$, we have:
{\small
\begin{align}
	&\;\;\quad\min_F \; C(G,\widetilde{D},F)
	\nonumber
\\
	& \; \equiv\;
	\min_F\; \mathbb{E}_{({\bf x}_i,{\bf y}_i) \sim S} 
	\quad \ell_c \left(F({\bf x}_i),\mbox{ReLU}({\bf y}_i -\widetilde{D}({\bf x}_i))\right) 
	\nonumber
	\\
	& \; \equiv\;
	\min_F\; \mathbb{E}_{({\bf x}_i,{\bf y}_i) \sim S} 
	\left[- \zvec_i^\top\!\log F(\xvec_i)
	\!-\! (1\!-\!\zvec_i)^\top\!\log (1\!-\!F(\xvec_i))\right]
	\nonumber
	\\
	& \; \equiv\;
	\min_F\; \mathbb{E}_{({\bf x}_i,{\bf y}_i) \sim S} 
	\quad H(\zvec_i) + \mbox{KL}(\zvec_i\parallel F(\xvec_i))
	\nonumber
	\\
	& \;\geq\; \mathbb{E}_{({\bf x}_i,{\bf y}_i) \sim S}\; H(\zvec_i) 
	\label{minF}
\end{align}
}
where $H(\cdot)$ denotes the entropy over a binomial distribution vector 
and KL$(\cdot)$ denotes the KL-divergence between two sets of binomial distributions.
Assume sufficient capacity for $F$, 
in the ideal case the minimum can be reached when 
the predictor obtains the same distributions as the $\zvec_i$; that is 
\begin{align}
	& F^*({\bf x}_i) = \zvec_i = \mbox{ReLU}({\bf y}_i -\widetilde{D}({\bf x}_i)),
	\;\,\forall ({\bf x}_i,{\bf y}_i) \in S
	\label{minFcond}
\end{align}

Next let's consider the minimization problem over $G$. 
Note $G$ is involved in both the generation loss and adversarial loss.
If we could find solutions that lead to minimals in both losses separately, 
we can guarantee a minimal in the united loss. 
Based on \cite[Theorem 1]{goodfellow2014generative}, 
the adversarial loss part
in  $C(G,\widetilde{D},F)$ can be rewritten as
{\small
\begin{align}
	&\quad\beta \mathcal{L}_{adv} 
	\nonumber\\
	&=\beta\,\Big( \mathbb{E}_{{\bf x} \sim p_S} [\mathrm{log} \frac{p_S(\xvec)}{p_S(\xvec)+p_g(\xvec)}]
		+\mathbb{E}_{{\bf x} \sim p_g} [\mathrm{log}\frac{p_g(\xvec)}{p_S(\xvec)+p_g(\xvec)}]\Big)
	\nonumber\\
	   &= \beta\,\Big(\mbox{KL}(p_S,\frac{p_S+p_g}{2}) - \log 2+ \mbox{KL}(p_g,\frac{p_S+p_g}{2}) -\log 2\Big)
	\nonumber\\
	   &= \beta\,\Big(\mbox{KL}(p_S,\frac{p_S+p_g}{2}) + \mbox{KL}(p_g,\frac{p_S+p_g}{2}) -\log 4\Big)
	\nonumber\\
	   &\geq -\beta\,\log 4
	   \label{minGadv}
\end{align}
}
where the minimal can be achieved when $p_S=p_g$ which leads to zero KL-divergence values.
The generation loss part (with least squares loss function)
in  $C(G,\widetilde{D},F)$ can be rewritten as
\begin{align}
&\quad\mathbb{E}_{({\bf x}_i,{\bf y}_i) \sim S} \Big(
\ell_g (G(\mbox{ReLU}({\bf y}_i -\widetilde{D}({\bf x}_i))),{\bf x}_i)\Big)
	\nonumber\\
=&\quad\mathbb{E}_{({\bf x}_i,{\bf y}_i) \sim S} 
\;  \|G(\mbox{ReLU}({\bf y}_i -\widetilde{D}({\bf x}_i)))-{\bf x}_i\|^2
	\nonumber\\
\geq & \quad 0
\label{minGg}
\end{align}
where the minimal 0 can only be achieved when 
\begin{align}
G(\mbox{ReLU}({\bf y}_i -\widetilde{D}({\bf x}_i)))={\bf x}_i,
	\quad\forall ({\bf x}_i,{\bf y}_i) \in S
\label{minGgcond}
\end{align}
The optimal condition above can be satisfied simultaneously together with
the condition $p_g=p_S$. Together with the condition in (\ref{minFcond}),
these conditions lead to a lower bound of 
$C(G,\widetilde{D},F)$
and the proposition is proved.
\end{proof}

\section{Experiments}

\subsection{Experimental Setting}
{\bf Datasets.}\quad We conducted experiments on twelve multi-label classification datasets.
Three of them have existing partial multi-label learning settings 
(mirflickr, music\_style and music\_emotion \cite{Fang2018PartialML}).
For each of the other nine datasets \cite{zhang2014review}, 
we transformed it into a PML dataset
by randomly adding irrelevant labels into the candidate label set of each training instance. 
By adding different numbers of irrelevant labels, for each dataset we can create multiple PML variants 
with different average numbers of candidate labels.
Following the setting of 
\cite{xie2018partial}, 
we also 
filtered out the rare labels and kept at most 15 classes in each dataset.
The detailed characteristics of the processed datasets are given in Table~\ref{pmldataset}.

{\bf Comparison Methods.}\quad
We compared our proposed method with five state-of-the-art PML methods and one baseline multi-label learning method.
We adopted a simple but effective neural network based multi-label learning method, 
ML-RBF \cite{zhang2009m}, 
as a baseline method, 
which performs PML by treating all the candidate labels as ground-truth labels.
Then we used five recently developed PML methods for comparison,
including the PML-LC and PML-FP methods from \cite{xie2018partial}, 
the PARTICLE-VLS and PARTICLE-MAP methods from \cite{Fang2018PartialML},
and PML-NI from~\cite{xiepartialAAAI20}.

{\bf Implementation.}\quad
The proposed PML-GAN model has four component networks, all of which are designed
as multilayer perceptrons with Leaky ReLu activation for the middle layers. 
The disambiguator, predictor, and discriminator are all three-layer networks
with sigmoid activation in the output layer, 
while the generator is a five layer network with Tanh activation in the output layer.
Batch normalization is also deployed in the middle three layers of the generation network.
We used the Adam~\cite{kingma2014adam} optimizer in our implementation.
The mini-batch size, $m$, is set to 64.
The hyperparameters \emph{k} (the number of steps for discriminator update) and \emph{n} 
(the number of label vectors sampled) in Algorithm \ref{alg_1} are set to 1 and $2^{10}$ respectively.
The hyperparameter $\beta$ is chosen from $\{0.001, 0.01, 0.1, 1, 10\}$ 
based on the classification loss value $\mathcal{L}_c$ in the training objective function;
that is, the $\beta$ value that leads to the smallest training $\mathcal{L}_c$ loss will be chosen.

\begin{table*}[t!]
	\centering
\vskip -.12in	
	\caption{Information of the experimental data sets. The number of instances, features and classes are recorded.
The ``\textit{avg.\#CLs}'' column lists the average number of candidate labels in each PML set.
	}
	\vskip .1in
	\label{pmldataset}
	\setlength{\tabcolsep}{2pt} 
	\resizebox{1\textwidth}{!}{%
		\begin{tabular}{lcccc|lccccc}
			\hline
			\mline{{Dataset}} &{\#Inst.}       &{\#Feats}      &{\#Classes}      &{avg.\#CLs}     
			&{Dataset}        &{\#Inst.}       &{\#Feats}      &{\#Classes}      &{avg.\#CLs}     
			\\
            \hline
            music\_emotion      &6833                     &98                      &11                          &5.29
&music\_style       &6839                     &98                      &10                          &6.04
\\
            mirflickr           &10433                    &100                     &7                           &3.35
&image              &2000                     &294                     &5                           &2,3,4
\\
            scene               &2407                     &294                     &6                           &3,4,5
&yeast              &2417                     &103                     &14                          &9,10,11,12
\\
            enron  	            &1702                     &1001                    &15                          &8,9,10,11,12,13
&corel5k  	        &5000                     &499                     &15                          &8,9,10,11,12,13
\\
            eurlex\_dc  	    &8636                     &100                     &15                          &8,9,10,11,12,13
&eurlex\_sm  	    &12679                    &100                     &15                          &8,9,10,11,12,13
\\
            delicious  	        &14000                    &500                     &15                          &8,9,10,11,12,13
&tmc2007  	        &28596                    &49060                   &15                          &8,9,10,11,12,13
\\\hline
		\end{tabular}%
	}
\vskip -.1in	
\end{table*}

\begin{table*}[t!]
	\centering
\vskip -.12in	
	\caption{ Comparison results of in terms of Hamming loss, ranking loss, one error and average precision.
	The best results are presented in bold font. The average number of candidate labels is 
	presented under the column ``avg.\#C.Ls''. }
	\vskip .1in
	\label{pmlresult}
	\setlength{\tabcolsep}{4pt} 
	\resizebox{\textwidth}{!}{%
		\begin{tabular}{l|c|ccccccc}
			\hline
			\mline{Data set} &avg.\#C.Ls &\textsc{Pml-Gan} &\textsc{Pml-Ni} &\textsc{\makecell{Particle\\-Vls}} &\textsc{\makecell{Particle\\-Map}} &\textsc{Pml-lc} &\textsc{Pml-fp}  &\textsc{Ml-rbf} \\
			\hline
            \multicolumn{9}{l}{Hamming loss (the smaller, the better)} \\  
            \hline
            music\_emotion  &5.29  	     &\textbf{.200$\pm$.004} &.212$\pm$.003    &.212$\pm$.004	         &.215$\pm$.004         	 &.236$\pm$.003           	 	
&.245$\pm$.004	         	           &.779$\pm$.004\\
            music\_style   &6.04  	     &\textbf{.115$\pm$.002} &.116$\pm$.004    &.121$\pm$.003	         &.175$\pm$.005         	 &.126$\pm$.004           	 	
&.126$\pm$.004	         	           &.856$\pm$.001\\
            mirflickr      &3.35  	     &.170$\pm$.003 &\textbf{.167$\pm$.003}    &.178$\pm$.035	         &.189$\pm$.081         	 &.202$\pm$.057           	 	
&.202$\pm$.057	         	           &.748$\pm$.002\\ \hline
            image          &3     	     &\textbf{.202$\pm$.006} &.210$\pm$.009    &.234$\pm$.065	         &.269$\pm$.096         	 &.264$\pm$.072           	 	
&.267$\pm$.063	         	           &.754$\pm$.003\\
            scene          &4  	         &\textbf{.132$\pm$.007} &.175$\pm$.003    &.184$\pm$.037	         &.174$\pm$.035         	 &.178$\pm$.029           	 	
&.187$\pm$.038	         	           &.820$\pm$.001\\ \hline
            yeast&\multirow{6}{*}{10}   &\textbf{.213$\pm$.008}  &.232$\pm$.004    &.226$\pm$.004	         &.220$\pm$.008         	 &.226$\pm$.008           	 	
&.219$\pm$.009	         	           &.694$\pm$.003\\
            enron&  	                 &\textbf{.186$\pm$.003} &.235$\pm$.005    &.197$\pm$.032	         &.190$\pm$.036
             &.206$\pm$.027           	 	
&.206$\pm$.027	         	           &.813$\pm$.004\\
            corel5k&  	                 &\textbf{.118$\pm$.001} &.135$\pm$.003    &.189$\pm$.012	         &.269$\pm$.027         	 &.151$\pm$.008           	 	
&.152$\pm$.008	         	           &.886$\pm$.001\\
            eurlex\_dc&  	             &\textbf{.044$\pm$.001} &.067$\pm$.001    &.061$\pm$.001	         &.064$\pm$.004         	 &.096$\pm$.001           	 	
&.071$\pm$.001	         	           &.933$\pm$.001\\
            eurlex\_sm&  	             &.083$\pm$.002          &.091$\pm$.008    &\textbf{.067$\pm$.001}	 &.076$\pm$.002         	 &.119$\pm$.006           	 	
&.122$\pm$.002	         	           &.885$\pm$.001\\
            delicious&  	             &\textbf{.249$\pm$.002} &.260$\pm$.002    &.260$\pm$.003	         &.290$\pm$.005         	 &.290$\pm$.004           	 	
&.290$\pm$.004	         	           &.712$\pm$.002\\
            tmc2007&  	                 &\textbf{.084$\pm$.001} &.089$\pm$.001    &.090$\pm$.003	         &.110$\pm$.003         	 &.103$\pm$.002           	 	
&.103$\pm$.002	         	           &.857$\pm$.001\\
			\hline	

            \multicolumn{9}{l}{Ranking loss (the smaller, the better)} \\
            \hline
            music\_emotion  &5.29  	     &.242$\pm$.007          &.251$\pm$.007          &.263$\pm$.008	           &\textbf{.240$\pm$.007}         	 &.267$\pm$.009           	 	
&.275$\pm$.010	         	           &.365$\pm$.010\\
            music\_style   &6.04  	     &.145$\pm$.006 &\textbf{.140$\pm$.009}          &.163$\pm$.007	         &.147$\pm$.005         	 &.215$\pm$.005           	 	
&.150$\pm$.005	         	           &.242$\pm$.006\\
            mirflickr      &3.35  	     &\textbf{.124$\pm$.014} &\textbf{.124$\pm$.004} &.227$\pm$.029	         &.129$\pm$.108         	 &.160$\pm$.029           	 	
&.143$\pm$.028	         	           &.195$\pm$.015\\ \hline
            image          &3     	     &\textbf{.191$\pm$.010} &.217$\pm$.008          &.239$\pm$.077	         &.250$\pm$.085         	 &.291$\pm$.134           	 	
&.217$\pm$.120	         	           &.251$\pm$.019\\
            scene        &4  	         &\textbf{.123$\pm$.009} &.213$\pm$.010          &.177$\pm$.049	         &.167$\pm$.060         	 &.192$\pm$.032           	 	
&.238$\pm$.056	         	           &.188$\pm$.014\\ \hline
            yeast&\multirow{8}{*}{10}   &\textbf{.194$\pm$.011}  &.222$\pm$.005          &.203$\pm$.007	         &.208$\pm$.012               &.219$\pm$.011           	 	
&.203$\pm$.008	         	           &.270$\pm$.007\\
            enron&  	                 &\textbf{.182$\pm$.012} &.236$\pm$.013          &.240$\pm$.078	         &\textbf{.182$\pm$.029}      &.239$\pm$.048           	 	
&.239$\pm$.047	         	           &.244$\pm$.010\\
            corel5k&  	                 &\textbf{.295$\pm$.011} &.392$\pm$.009          &.367$\pm$.032	         &.311$\pm$.008         	 &.366$\pm$.035           	 	
&.398$\pm$.025	         	           &.404$\pm$.082\\
            eurlex\_dc&  	             &\textbf{.067$\pm$.005} &.126$\pm$.010          &.150$\pm$.004	         &.085$\pm$.004         	 &.137$\pm$.008           	 	
&.131$\pm$.001	         	           &.135$\pm$.003\\
            eurlex\_sm&  	             &\textbf{.122$\pm$.007} &.246$\pm$.037          &.129$\pm$.007	         &.127$\pm$.009         	 &.282$\pm$.007           	 	
&.182$\pm$.008	         	           &.183$\pm$.003\\
            delicious&  	             &\textbf{.258$\pm$.004} &.287$\pm$.002          &.314$\pm$.005	         &.276$\pm$.004         	 &.277$\pm$.005           	 	
&.276$\pm$.005	         	           &.316$\pm$.003\\
            tmc2007&  	                 &\textbf{.070$\pm$.001} &.077$\pm$.001          &.096$\pm$.008	         &.095$\pm$.007         	 &.082$\pm$.005           	 	
&.080$\pm$.005	         	           &.153$\pm$.002\\
			\hline

            \multicolumn{8}{l}{One error  (the smaller, the better)} \\
            \hline
            music\_emotion  &5.29  	     &\textbf{.450$\pm$.028} &.500$\pm$.014          &.473$\pm$.016	         &.475$\pm$.018         	 &.556$\pm$.028           	 	
&.540$\pm$.027	         	         &.587$\pm$.019\\
            music\_style   &6.04  	     &\textbf{.347$\pm$.016} &.355$\pm$.016          &.374$\pm$.005	         &.399$\pm$.019         	 &.409$\pm$.013           	 	
&.408$\pm$.013	         	         &.385$\pm$.006\\
            mirflickr      &3.35  	     &.236$\pm$.059	         &.307$\pm$.020          &\textbf{.165$\pm$.150} &.229$\pm$.306         	 &.300$\pm$.129           	 	
&.298$\pm$.121	         	         &.338$\pm$.002\\ \hline
            image          &3     	     &\textbf{.342$\pm$.014} &.401$\pm$.028          &.369$\pm$.134	         &.387$\pm$.147         	 &.542$\pm$.191           	 	
&.549$\pm$.174	         	         &.398$\pm$.034\\
            scene          &4  	         &\textbf{.321$\pm$.022} &.413$\pm$.018          &.340$\pm$.078	         &.349$\pm$.082         	 &.497$\pm$.089           	 	
&.523$\pm$.118	         	         &.428$\pm$.022\\    \hline
            yeast&\multirow{8}{*}{10}    &\textbf{.245$\pm$.017} &.290$\pm$.009          &.248$\pm$.019	         &.252$\pm$.018              &.257$\pm$.017           	 	
&.263$\pm$.027	         	         &.408$\pm$.023\\
            enron&  	                 &\textbf{.307$\pm$.035} &.498$\pm$.024          &.411$\pm$.101	         &.351$\pm$.040         	 &.494$\pm$.039           	 	
&.498$\pm$.038	         	         &.495$\pm$.019\\
            corel5k&  	                 &\textbf{.685$\pm$.015} &.792$\pm$.016          &.835$\pm$.025	         &.721$\pm$.035         	 &.784$\pm$.029           	 	
&.787$\pm$.024	         	         &.809$\pm$.015\\
            eurlex\_dc&  	             &\textbf{.307$\pm$.013} &.521$\pm$.015          &.390$\pm$.016	         &.374$\pm$.014         	 &.707$\pm$.014           	 	
&.518$\pm$.011	         	         &.342$\pm$.008\\
            eurlex\_sm&  	             &\textbf{.339$\pm$.013} &.516$\pm$.019          &.350$\pm$.014	         &.360$\pm$.015         	 &.506$\pm$.031           	 	
&.542$\pm$.018	         	         &.340$\pm$.005\\
            delicious&  	             &.368$\pm$.009	         &.415$\pm$.007          &\textbf{.366$\pm$.015} &.414$\pm$.018         	 &.401$\pm$.015           	 	
&.399$\pm$.013	         	         &.450$\pm$.009\\
            tmc2007&  	                 &.202$\pm$.007	         &.214$\pm$.008          &\textbf{.194$\pm$.029} &.267$\pm$.018         	 &.235$\pm$.019           	 	
&.236$\pm$.019	         	         &.388$\pm$.006\\    \hline

            \multicolumn{9}{l}{Average precision  (the larger, the better)} \\
            \hline
            music\_emotion  &5.29  	     &\textbf{.621$\pm$.013} &.598$\pm$.007          &.605$\pm$.012	         &.612$\pm$.009         	 &.574$\pm$.013           	 	
&.568$\pm$.014	         	           &.506$\pm$.012\\
            music\_style   &6.04  	     &\textbf{.732$\pm$.010} &.729$\pm$.012          &.715$\pm$.009	         &.709$\pm$.009         	 &.702$\pm$.008           	 	
&.703$\pm$.008	         	           &.646$\pm$.010\\
            mirflickr      &3.35  	     &.777$\pm$.027          &.787$\pm$.008          &.678$\pm$.027	         &\textbf{.791$\pm$.202}         	 &.736$\pm$.043           	 	
&.758$\pm$.039	         	           &.676$\pm$.048\\ \hline
            image          &3     	     &\textbf{.775$\pm$.010} &.740$\pm$.013          &.741$\pm$.090	         &.729$\pm$.086         	 &.644$\pm$.131           	 	
&.725$\pm$.119	         	           &.723$\pm$.021\\
            scene        &4  	         &\textbf{.801$\pm$.012} &.688$\pm$.011          &.750$\pm$.074	         &.753$\pm$.064         	 &.689$\pm$.047           	 	
&.710$\pm$.079	         	           &.728$\pm$.015\\ \hline
            yeast&\multirow{8}{*}{10}    &\textbf{.732$\pm$.014} &.701$\pm$.004          &.724$\pm$.010	         &.714$\pm$.010         	 &.721$\pm$.012           	 	
&.728$\pm$.010	         	           &.634$\pm$.008\\
            enron&  	                 &\textbf{.665$\pm$.019} &.580$\pm$.009          &.595$\pm$.099	         &.661$\pm$.047         	 &.556$\pm$.041           	 	
&.575$\pm$.041	         	           &.560$\pm$.009\\
            corel5k&  	                 &\textbf{.441$\pm$.012} &.345$\pm$.010          &.377$\pm$.025	         &.415$\pm$.008         	 &.345$\pm$.027           	 	
&.384$\pm$.021	         	           &.334$\pm$.008\\
            eurlex\_dc&  	             &\textbf{.797$\pm$.009} &.704$\pm$.022          &.692$\pm$.013	         &.751$\pm$.008         	 &.693$\pm$.019           	 	
&.716$\pm$.014	         	           &.710$\pm$.000\\
            eurlex\_sm&  	             &\textbf{.720$\pm$.009} &.558$\pm$.023          &.705$\pm$.009	         &.683$\pm$.011         	 &.438$\pm$.016           	 	
&.679$\pm$.011	         	           &.656$\pm$.000\\
            delicious&  	             &\textbf{.630$\pm$.006} &.597$\pm$.003          &.596$\pm$.007	         &.601$\pm$.008         	 &.607$\pm$.007           	 	
&.608$\pm$.006	         	           &.576$\pm$.004\\
            tmc2007&  	                 &\textbf{.821$\pm$.002} &.807$\pm$.003          &.799$\pm$.013	         &.759$\pm$.013         	 &.793$\pm$.012           	 	
&.794$\pm$.012	         	           &.662$\pm$.003\\
			\hline	
		\end{tabular}%
	}
\vskip -.2in
\end{table*}

\subsection{Comparison Results}

We compared the proposed PML-GAN method with the six comparison methods on the twelve datasets.
For each dataset, we randomly select 80\% of the data for training and use the remaining 20\% for testing.
We repeat each experiment 10 times with different random partitions of the datasets.
The comparison test results in terms of four commonly used evaluation metrics 
(Hamming loss, ranking loss, one error and average precision) 
\cite{zhang2014review} are reported 
in Table \ref{pmlresult}. 
The results are the means and standard deviations over the 10 repeated runs.
We can see that the methods specially developed for PML problems all outperform 
the baseline multi-label neural network classifier, ML-RBF, in most cases. 
But it is difficult to beat the baseline competitor on all the datasets
with different evaluation metrics.
Among the total 48 cases over 12 datasets and 4 evaluation metrics, 
PML-NI, PARTICLE-VLS, PARTICLE-MAP, PML-LC and PML-FP outperform ML-RBF in 
39, 42, 45, 35 and 40 cases respectively.
By contrast, the proposed PML-GAN method outperforms ML-RBF {\em consistently} across all the 48 cases
with remarkable performance gains.
Even comparing with all the other five PML methods, PML-GAN produced
the best results in 40 out of the total 48 cases. 
Moreover, the performance gains yield by PML-GAN over all the other methods 
are quite notable in many cases.
For example, in terms of average precision, PML-GAN outperforms the 
best alternative comparison method by 
4.6\%, 4.8\%, and 3.4\%
on {\em eurlex\_dc, scene} and {\em image} respectively.
These results clearly demonstrate the effectiveness of the proposed PML-GAN model. 

The results reported in Table~\ref{pmlresult} and discussed above
are produced on each dataset with a selected average number of candidate labels.
As shown in Table~\ref{pmldataset}, we have multiple PML variants with different numbers of candidate labels
for nine of the datasets in the list. 
In total this provides us 49 PML datasets.
We hence also conducted experiments on each of these 49 variant datasets,
by comparing the proposed PML-GAN with each of the other methods 
in terms of the 4 evaluation metrics.
In total there are 196 comparison cases for each pair of methods.
For the comparison of ``PML-GAN vs other method'' in each case, 
we conducted pairwise t-test at significance level of $p< 0.05$.
The win/tie/loss counts in all cases are reported in Table~\ref{tablest}.
We can see that overall the proposed PML-GAN significantly outperforms 
PML-NI, PARTICLE-VLS, PARTICLE-MAP, PML-LC, PML-FP, and ML-RBF 
in 80.6\%, 75\%, 77\%, 81.1\%, 82.6\%, and 90.8\% of the cases respectively.
This again validates the efficacy of the proposed method for PML.

\textbf{Impact of Irrelevant Labels}
To demonstrate how would the number of irrelevant labels affect the performance of PML methods, 
we plotted the experimental results on the {\em delicious} dataset with different average numbers of candidate labels
in Figure~\ref{imgonenron}.
We can see with the increase of the number of irrelevant labels, consequently the average number of candidate labels,
the performance of each method in general degrades. 
Nevertheless, the proposed PML-GAN consistently outperforms all the other methods. 
Moreover, in terms of Hamming loss, the performance of PML-GAN actually is quite stable with the increase of the noisy labels.
This validates the effectiveness of PML-GAN in irrelevant noisy label disambiguation.
%

\begin{figure*}[t]
\centering
\begin{minipage}[b]{1\textwidth}
\centering	
\includegraphics[width=.95\textwidth]{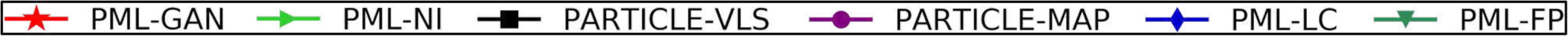} \\
\end{minipage}
\subfigure[Hamming Loss]{
\label{fig:example1}
\includegraphics[width=3.0cm,height=2.0cm]{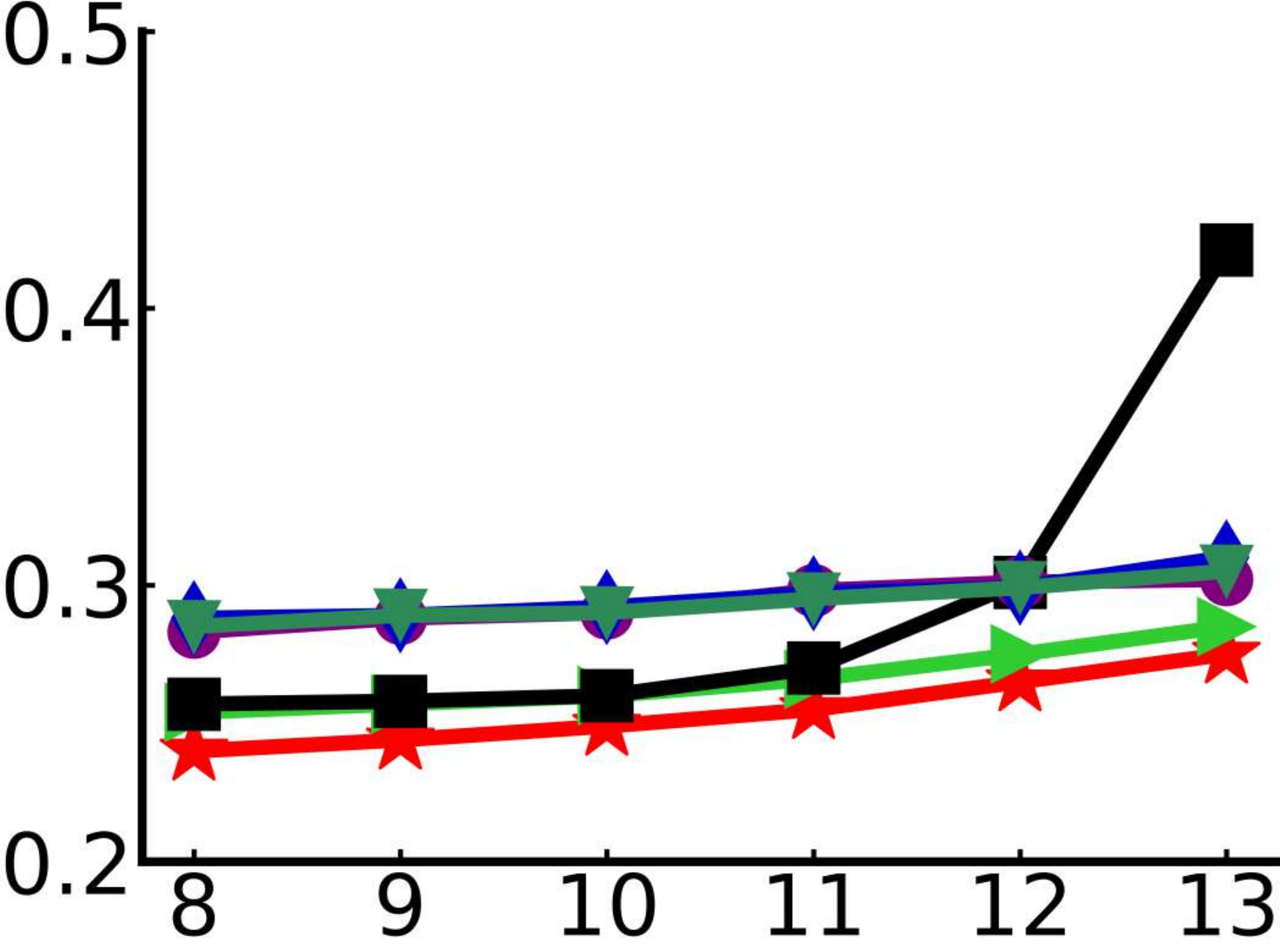}
}
\hspace{.08in}
\subfigure[Ranking Loss]{
\label{fig:example1}
\includegraphics[width=3.0cm,height=2.0cm]{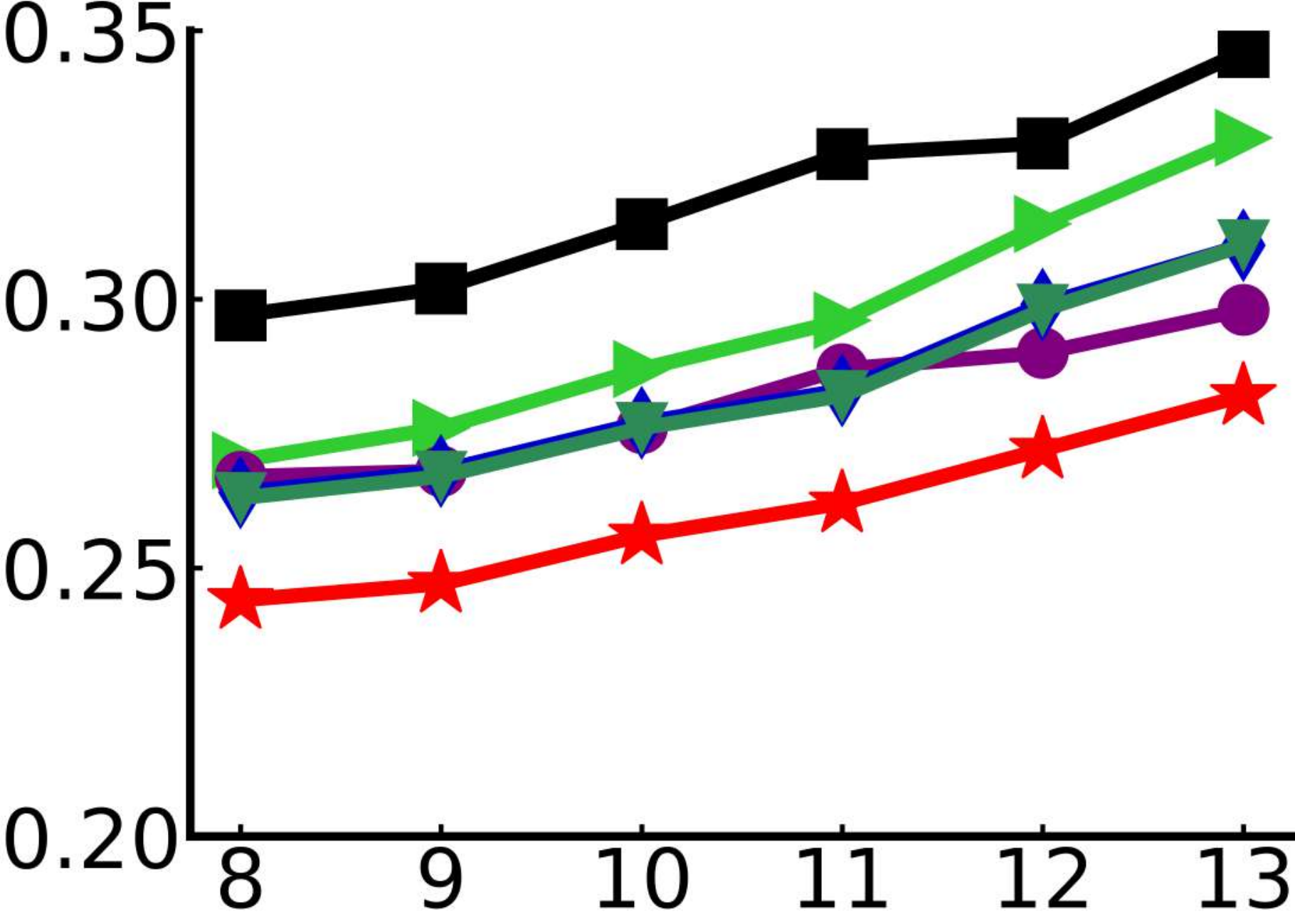}
}
\hspace{.08in}
\subfigure[One Error]{
\label{fig:example1}
\includegraphics[width=3.0cm,height=2.0cm]{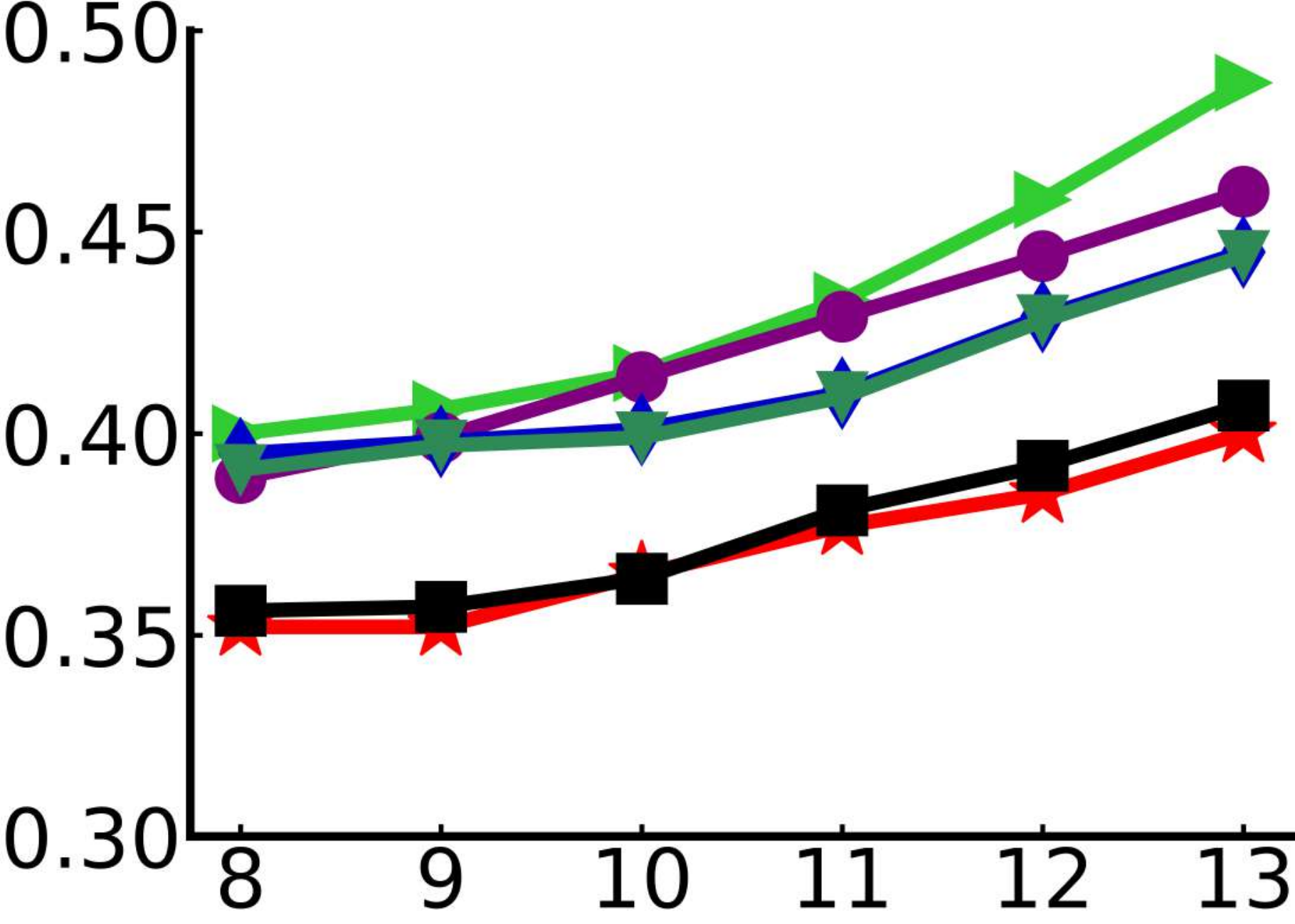}
}
\hspace{.08in}
\subfigure[Average Precision]{
\label{fig:example}
\includegraphics[width=3.0cm,height=2.0cm]{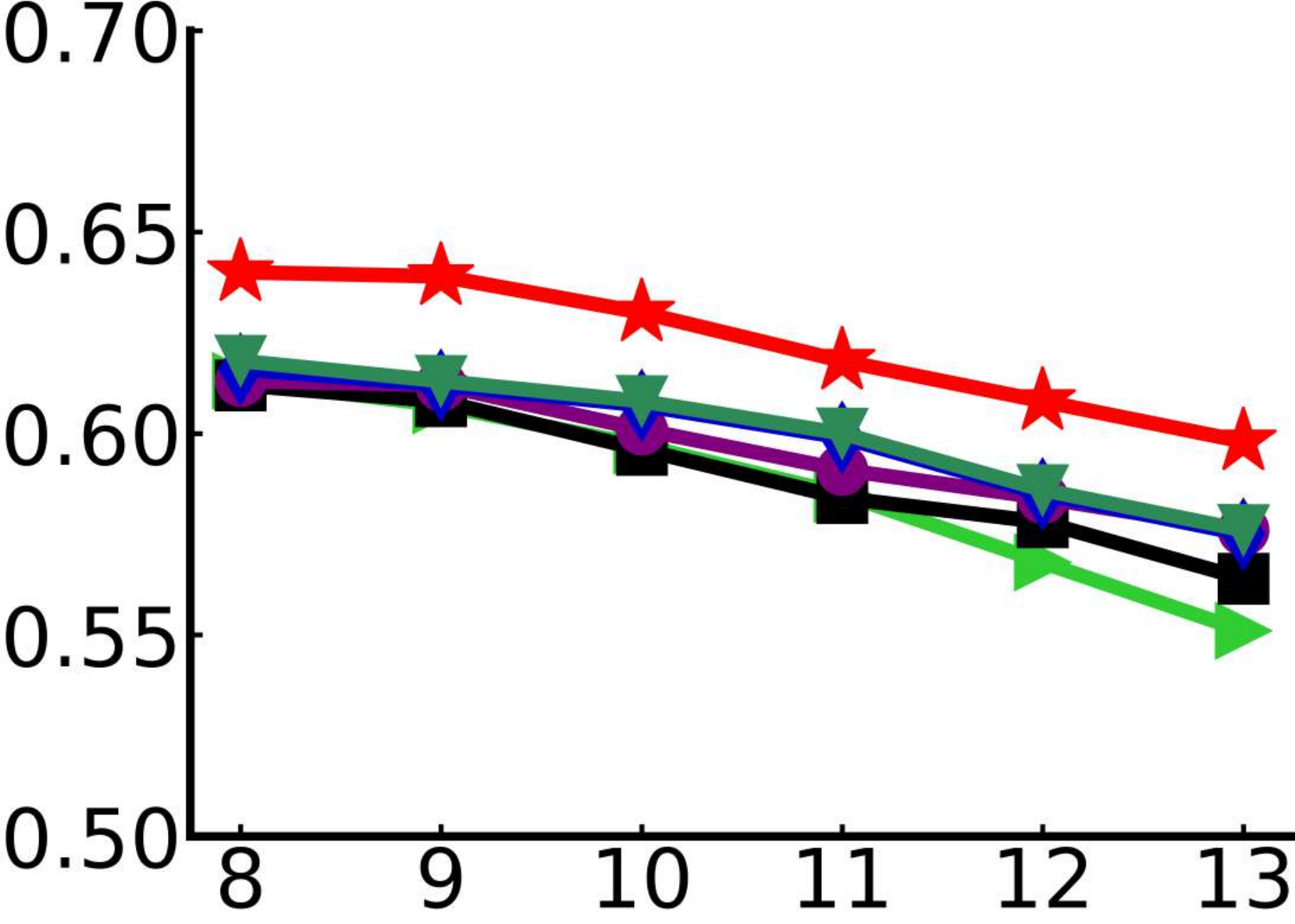}
}
\vskip -.1in
\caption{Test results with different average numbers of candidate labels on the delicious dataset.}
\label{imgonenron}
\vskip -.1in
\end{figure*}

\begin{table*}[t]
	\centering
	\caption{Win/tie/loss counts of pairwise t-test (with $p < 0.05$ ) between \textsc{PML-GAN} and each comparison method
	over all dataset variants with different numbers of candidate labels.}
	\vskip .1in
	\label{tablest}
	\setlength{\tabcolsep}{4pt} 
   \begin{threeparttable}
	\resizebox{\textwidth}{!}{%
		\begin{tabular}{p{3.3cm}<{}|p{1.5cm}<{\centering}p{2.5cm}<{\centering}p{2.5cm}<{\centering}p{1.5cm}<{\centering}
p{1.5cm}<{\centering}p{1.5cm}<{\centering}}
			\hline
          \multirow{2}{*}{Evaluation Metric} & \multicolumn{5}{c}{\textsc{PML-GAN} \hspace{0.1cm} vs \hspace{0.1cm} --} \\
                                               &\textsc{Pml-Ni} &\textsc{Particle-Vls} &\textsc{Particle-Map} &\textsc{Pml-Lc} &\textsc{Pml-fp}   &\textsc{Ml-rbf}  \\ \hline
                              Hamming loss        &36/11/2  &39/6/4     &38/9/2       &40/7/2       &40/3/6       &45/4/0\\
                              Ranking loss        &38/11/0   &38/9/2     &38/8/3       &38/8/3       &40/5/4       &44/3/2\\
                              One error           &44/5/0   &33/12/4    &39/7/3       &41/8/0       &40/9/0       &43/6/0\\
                              Average precision   &40/8/1   &37/8/4     &36/10/3      &40/6/3       &42/4/3       &46/3/0\\
            \hline
                              Total               &158/35/3 &147/35/14  &151/34/11    &159/29/8    &162/21/13    &178/16/2  \\
            \hline
		\end{tabular}%
	}
 \end{threeparttable}
\vskip -.1in	
\end{table*}

\begin{table*}[t!]
	\centering
	\caption{Comparison results of PML-GAN and its three ablation variants. 
	}
	\vskip .1in
	\label{cstable}
	\setlength{\tabcolsep}{4.5pt} 
	\resizebox{\textwidth}{!}{%
		\begin{tabular}{lcccc|cccc}
			\hline 
			\mline{Data set} &\textsc{Pml-Gan} &\textsc{Cls-Gen} &\textsc{Cls-Gan} &\textsc{Cls-ml} &\textsc{Pml-Gan} &\textsc{Cls-gen}  &\textsc{Cls-gan} &\textsc{Cls-ml} \\
			\hline
            \multicolumn{5}{l}{Hamming loss (the smaller, the better)} &\multicolumn{4}{l}{Ranking loss (the smaller, the better)}\\  
            \hline
            music\_emotion  &\textbf{.200$\pm$.004} &.203$\pm$.003	         &.202$\pm$.004         	 &.207$\pm$.004
&\textbf{.242$\pm$.007} &.249$\pm$.007	         &.244$\pm$.007         	 &.250$\pm$.010 \\
            music\_style    &\textbf{.115$\pm$.002} &.118$\pm$.003	         &.117$\pm$.004         	 &.121$\pm$.001
&\textbf{.145$\pm$.006} &.147$\pm$.007	         &.149$\pm$.007         	 &.155$\pm$.004 \\
            mirflickr       &\textbf{.170$\pm$.003} &.173$\pm$.004	         &.174$\pm$.005         	 &.177$\pm$.004
&\textbf{.124$\pm$.014} &.131$\pm$.019	         &.133$\pm$.021         	 &.136$\pm$.019 \\ \hline
            image           &\textbf{.202$\pm$.006} &.206$\pm$.005	         &.204$\pm$.008         	 &.220$\pm$.006
&\textbf{.191$\pm$.010} &.195$\pm$.010	         &.196$\pm$.016         	 &.201$\pm$.010 \\
            scene           &\textbf{.132$\pm$.007} &.140$\pm$.010	         &.138$\pm$.005         	 &.148$\pm$.008
&\textbf{.123$\pm$.009} &.130$\pm$.007	         &.137$\pm$.010         	 &.140$\pm$.020 \\  \hline
            yeast           &\textbf{.213$\pm$.008} &.219$\pm$.007	         &.216$\pm$.003         	 &.222$\pm$.006
&\textbf{.194$\pm$.011} &.199$\pm$.008	         &.195$\pm$.005         	 &.203$\pm$.006 \\
            enron           &\textbf{.186$\pm$.003} &.273$\pm$.015	         &.277$\pm$.014         	 &.281$\pm$.012
&\textbf{.182$\pm$.012} &.185$\pm$.009	         &.188$\pm$.009         	 &.189$\pm$.009 \\
            corel5k         &\textbf{.118$\pm$.001} &\textbf{.118$\pm$.001}	 &.120$\pm$.003         	 &.122$\pm$.003
&\textbf{.295$\pm$.011} &.304$\pm$.013	         &.299$\pm$.007         	 &.306$\pm$.015 \\
            eurlex\_dc   	&\textbf{.044$\pm$.001} &.050$\pm$.001	         &.047$\pm$.001         	 &.054$\pm$.001
&\textbf{.067$\pm$.005} &.068$\pm$.004	         &.068$\pm$.005         	 &.071$\pm$.008 \\
            eurlex\_sm  	&\textbf{.083$\pm$.002} &.085$\pm$.001	         &.086$\pm$.001         	 &.088$\pm$.002
&\textbf{.122$\pm$.007} &.125$\pm$.003	         &.125$\pm$.005         	 &.127$\pm$.004 \\
            delicious   	&\textbf{.249$\pm$.002} &.251$\pm$.003	         &.252$\pm$.001         	 &.255$\pm$.002
&\textbf{.258$\pm$.004} &.261$\pm$.007	         &.259$\pm$.005         	 &.269$\pm$.003 \\
            tmc2007      	&\textbf{.084$\pm$.001} &.086$\pm$.001	         &.086$\pm$.002              &.091$\pm$.001
&\textbf{.070$\pm$.001} &.073$\pm$.002	         &.072$\pm$.001         	 &.075$\pm$.003 \\
			\hline
            \multicolumn{5}{l}{Average precision  (the larger, the better)} &\multicolumn{4}{l}{One error (the smaller, the better)}\\
            \hline
            music\_emotion  &\textbf{.621$\pm$.013} &.608$\pm$.013	         &.612$\pm$.014         	 &.605$\pm$.013
&\textbf{.450$\pm$.028} &.465$\pm$.019	         &.469$\pm$.029         	 &.478$\pm$.028 \\
            music\_style    &\textbf{.732$\pm$.010} &.725$\pm$.012	         &.726$\pm$.011         	 &.720$\pm$.004
&\textbf{.347$\pm$.016} &.359$\pm$.021	         &.356$\pm$.018         	 &.367$\pm$.007 \\
            mirflickr       &\textbf{.777$\pm$.027} &.765$\pm$.037	         &.761$\pm$.036         	 &.754$\pm$.036
&\textbf{.336$\pm$.059} &.384$\pm$.084	         &.399$\pm$.067         	 &.417$\pm$.086 \\ \hline
            image           &\textbf{.775$\pm$.010} &.766$\pm$.009	         &.766$\pm$.018         	 &.758$\pm$.011
&\textbf{.342$\pm$.014} &.359$\pm$.016	         &.359$\pm$.029         	 &.364$\pm$.021 \\
            scene           &\textbf{.801$\pm$.012} &.793$\pm$.009	         &.783$\pm$.013         	 &.780$\pm$.021
&\textbf{.321$\pm$.022} &.330$\pm$.017	         &.349$\pm$.020         	 &.350$\pm$.027 \\  \hline
            yeast           &\textbf{.732$\pm$.014} &.723$\pm$.012	         &.730$\pm$.008         	 &.715$\pm$.009
&\textbf{.245$\pm$.017} &.262$\pm$.025	         &.247$\pm$.012         	 &.264$\pm$.018 \\
            enron           &\textbf{.665$\pm$.019} &.658$\pm$.017	         &.648$\pm$.028         	 &.645$\pm$.021
&\textbf{.307$\pm$.035} &.328$\pm$.024	         &.347$\pm$.049         	 &.350$\pm$.035 \\
            corel5k         &\textbf{.441$\pm$.012} &.431$\pm$.015	         &.439$\pm$.009         	 &.428$\pm$.017
&\textbf{.685$\pm$.015} &.705$\pm$.023	         &.690$\pm$.018         	 &.707$\pm$.020 \\
            eurlex\_dc   	&\textbf{.797$\pm$.009} &.790$\pm$.009	         &.792$\pm$.009              &.779$\pm$.015
&\textbf{.307$\pm$.013} &.310$\pm$.015	         &.312$\pm$.015         	 &.315$\pm$.021 \\
            eurlex\_sm  	&\textbf{.720$\pm$.009} &.713$\pm$.005	         &.713$\pm$.008         	 &.711$\pm$.005
&\textbf{.339$\pm$.013} &.351$\pm$.009	         &.350$\pm$.011         	 &.356$\pm$.010 \\
            delicious   &\textbf{.630$\pm$.006} &.627$\pm$.009	             &.626$\pm$.005              &.620$\pm$.002
&\textbf{.369$\pm$.009} &.375$\pm$.015	         &.381$\pm$.012         	 &.386$\pm$.003 \\
            tmc2007      	&\textbf{.821$\pm$.002} &.817$\pm$.003	         &.818$\pm$.004              &.815$\pm$.004
&\textbf{.202$\pm$.007} &.205$\pm$.004	         &.206$\pm$.007         	 &.210$\pm$.007 \\
			\hline
		\end{tabular}%
	}
\vskip -.1in	
\end{table*}

\subsection{Ablation Study}

As shown in Eq.(\ref{eq:objective}), 
the objective of PML-GAN contains three parts: classification loss, generation loss and adversarial loss.
The generation loss and adversarial loss are integrated 
to assist the predictor training.
To investigate and validate the contribution of the generation loss and adversarial loss, 
we conducted an ablation study by comparing PML-GAN with three of its ablation variants:
(1) CLS-GEN, which drops the adversarial loss; 
(2) CLS-GAN, which drops the generation loss; 
and (3) CLS-ML, which only uses the classification loss by dropping both the adversarial loss and generation loss.
The comparison results are reported in Table \ref{cstable}. 
We can see that comparing to the full model, all three variants produced inferior results in general.
Among the three variants, both {\em CLS-GEN} and {\em CLS-GAN} outperform {\em CLS-ML} in most cases. 
This suggests that both generation loss and adversarial loss are critical terms for the proposed model.
Moreover, even the baseline variant {\em CLS-ML} still produces some reasonable PML results.
This suggests the integration of our proposed prediction network and disambiguation network is also effective.

\section{Conclusion}

In this paper, we proposed a novel adversarial model for PML. 
The proposed model comprises four component networks, 
which form an encoder-decoder framework to improve noise label disambiguation
and boost multi-label learning performance.
The training problem forms a minimax adversarial optimization,
which is solved using an alternative min-max procedure with minibatch stochastic gradient descent. 
We conducted extensive experiments on multiple PML datasets.
The results show that the proposed model achieves
the state-of-the-art PML performance.

\bibliography{paperbib}

\begin{thebibliography}{10}

\bibitem{burkhardt2018online}
S.~Burkhardt and S.~Kramer.
\newblock Online multi-label dependency topic models for text classification.
\newblock {\em Machine Learning}, 107(5):859--886, 2018.

\bibitem{chen2018learning}
C.-H. Chen, V.~M. Patel, and R.~Chellappa.
\newblock Learning from ambiguously labeled face images.
\newblock {\em IEEE transactions on pattern analysis and machine intelligence},
  40(7):1653--1667, 2018.

\bibitem{chen2020multi}
Z.-S. Chen, X.~Wu, Q.-G. Chen, Y.~Hu, and M.-L. Zhang.
\newblock Multi-view partial multi-label learning with graph-based
  disambiguation.
\newblock In {\em AAAI Conference on Artificial Intelligence (AAAI)}, 2020.

\bibitem{cour2011learning}
T.~Cour, B.~Sapp, and B.~Taskar.
\newblock Learning from partial labels.
\newblock {\em Journal of Machine Learning Research}, 12(May):1501--1536, 2011.

\bibitem{Fang2018PartialML}
J.-P. Fang and M.-L. Zhang.
\newblock Partial multi-label learning via credible label elicitation.
\newblock In {\em AAAI Conference on Artificial Intelligence (AAAI)}, 2019.

\bibitem{goodfellow2014generative}
I.~Goodfellow, J.~Pouget-Abadie, M.~Mirza, B.~Xu, D.~Warde-Farley, S.~Ozair,
  A.~Courville, and Y.~Bengio.
\newblock Generative adversarial nets.
\newblock In {\em Advances in neural information processing systems (NeurIPS)},
  2014.

\bibitem{han2018co}
B.~Han, Q.~Yao, X.~Yu, G.~Niu, M.~Xu, W.~Hu, I.~Tsang, and M.~Sugiyama.
\newblock Co-teaching: Robust training of deep neural networks with extremely
  noisy labels.
\newblock In {\em Advances in Neural Information Processing Systems (NeurIPS)},
  2018.

\bibitem{jakab2018unsupervised}
T.~Jakab, A.~Gupta, H.~Bilen, and A.~Vedaldi.
\newblock Unsupervised learning of object landmarks through conditional image
  generation.
\newblock In {\em Advances in Neural Information Processing Systems (NeurIPS)},
  2018.

\bibitem{kaneko2019label}
T.~Kaneko, Y.~Ushiku, and T.~Harada.
\newblock Label-noise robust generative adversarial networks.
\newblock In {\em IEEE Conference on Computer Vision and Pattern Recognition
  (CVPR)}, 2019.

\bibitem{kingma2014adam}
D.~P. Kingma and J.~Ba.
\newblock Adam: A method for stochastic optimization.
\newblock {\em arXiv preprint arXiv:1412.6980}, 2014.

\bibitem{kumar2017semi}
A.~Kumar, P.~Sattigeri, and P.~T. Fletcher.
\newblock Semi-supervised learning with {GANs}: Manifold invariance with
  improved inference.
\newblock In {\em Advances in Neural Information Processing Systems (NeurIPS)},
  2017.

\bibitem{lecouat2018semi}
B.~Lecouat, C.-S. Foo, H.~Zenati, and V.~R. Chandrasekhar.
\newblock Semi-supervised learning with {GANs}: Revisiting manifold
  regularization.
\newblock {\em arXiv preprint arXiv:1805.08957}, 2018.

\bibitem{lee2018robust}
K.~Lee, S.~Yun, K.~Lee, H.~Lee, B.~Li, and J.~Shin.
\newblock Robust determinantal generative classifier for noisy labels and
  adversarial attacks.
\newblock In {\em Advances in Neural Information Processing Systems (NeurIPS)},
  2018.

\bibitem{lee2019robust}
K.~Lee, S.~Yun, K.~Lee, H.~Lee, B.~Li, and J.~Shin.
\newblock Robust inference via generative classifiers for handling noisy
  labels.
\newblock In {\em n International Conference on Machine Learning (ICML)}, 2019.

\bibitem{li2014multi}
X.~Li, F.-P. Zhao, and Y.-H. Guo.
\newblock Multi-label image classification with a probabilistic label
  enhancement model.
\newblock In {\em the Conference on Uncertainty in Artificial Intelligence
  (UAI)}, 2014.

\bibitem{liu2012conditional}
L.-P. Liu and T.~Dietterich.
\newblock A conditional multinomial mixture model for superset label learning.
\newblock In {\em Advances in neural information processing systems (NeurIPS)},
  2012.

\bibitem{Sun2019PartialML}
L.-J. Sun, S.-H. Feng, T.~Wang, C.-Y. Lang, and Y.~Jin.
\newblock Partial multi-label learning by low-rank and sparse decomposition.
\newblock In {\em AAAI Conference on Artificial Intelligence (AAAI)}, 2019.

\bibitem{sun2010multi}
Y.-Y. Sun, Y.~Zhang, and Z.-H. Zhou.
\newblock Multi-label learning with weak label.
\newblock In {\em AAAI Conference on Artificial Intelligence (AAAI)}, 2010.

\bibitem{TheNIPS18}
K.~Thekumparampil, A.~Khetan, Z.~Lin, and S.~Oh.
\newblock Robustness of conditional {GANs} to noisy labels.
\newblock In {\em Advances in Neural Information Processing Systems (NeurIPS)},
  2018.

\bibitem{wang2019discriminative}
H.~Wang, W.~Liu, Y.~Zhao, C.~Zhang, T.~Hu, and G.~Chen.
\newblock Discriminative and correlative partial multi-label learning.
\newblock In {\em Proceedings of the Twenty-Eighth International Joint
  Conference on Artificial Intelligence (IJCAI)}, 2019.

\bibitem{wei2018learning}
T.~Wei, L.-Z. Guo, Y.-F. Li, and W.~Gao.
\newblock Learning safe multi-label prediction for weakly labeled data.
\newblock {\em Machine Learning}, 107(4):703--725, 2018.

\bibitem{xie2018partial}
M.-K. Xie and S.-J. Huang.
\newblock Partial multi-label learning.
\newblock In {\em AAAI Conference on Artificial Intelligence (AAAI)}, 2018.

\bibitem{xiepartialAAAI20}
M.-K. Xie and S.-J. Huang.
\newblock Partial multi-label learning with noisy label identification.
\newblock In {\em AAAI Conference on Artificial Intelligence (AAAI)}, 2020.

\bibitem{xupartialAAAI20}
N.~Xu, Y.-P. Liu, and X.~Geng.
\newblock Partial multi-label learning with label distribution.
\newblock In {\em AAAI Conference on Artificial Intelligence (AAAI)}, 2020.

\bibitem{yu2016maximum}
F.~Yu and M.-L. Zhang.
\newblock Maximum margin partial label learning.
\newblock In {\em Asian Conference on Machine Learning}, 2016.

\bibitem{zhang2009m}
M.-L. Zhang.
\newblock {ML-RBF}: rbf neural networks for multi-label learning.
\newblock {\em Neural Processing Letters}, 29(2):61--74, 2009.

\bibitem{zhang2018binary}
M.-L. Zhang, Y.-K. Li, X.-Y. Liu, and X.~Geng.
\newblock Binary relevance for multi-label learning: an overview.
\newblock {\em Frontiers of Computer Science}, 12(2):191--202, 2018.

\bibitem{zhang2015solving}
M.-L. Zhang and F.~Yu.
\newblock Solving the partial label learning problem: An instance-based
  approach.
\newblock In {\em International Joint Conference on Artificial Intelligence
  (IJCAI)}, 2015.

\bibitem{zhang2014review}
M.-L. Zhang and Z.-H. Zhou.
\newblock A review on multi-label learning algorithms.
\newblock {\em IEEE transactions on knowledge and data engineering},
  26(8):1819--1837, 2014.

\bibitem{zhang2018generalized}
Z.~Zhang and M.~Sabuncu.
\newblock Generalized cross entropy loss for training deep neural networks with
  noisy labels.
\newblock In {\em Advances in Neural Information Processing Systems (NeurIPS)},
  2018.

\end{thebibliography}
\bibliographystyle{abbrv}

\end{document}